\documentclass[12pt]{article}
\usepackage{amsmath}
\usepackage{graphicx}
\usepackage{hyperref}
\pdfoutput=1
\usepackage{ dsfont }
\usepackage[latin1]{inputenc}
\usepackage{amsthm}
\usepackage{blkarray}
\usepackage{mathtools}
\usepackage{amsfonts}

\newtheorem*{theorem}{Theorem}
\theoremstyle{definition}
\newtheorem*{definition}{Definition}

\title{The Combinatorics of \textit{Salva Veritate} Principles}
\author{Norman E. Trushaev \footnote{The University of Vermont, Department of Mathematics}}
\date{January 13, 2022}

\begin{document}

\maketitle

\begin{abstract}
   Various concepts of grammatical compositionality arise in many theories of both natural and artificial languages, and often play a key role in accounts of the syntax-semantics interface. We propose that many instances of compositionality should entail non-trivial combinatorial claims about the expressive power of languages which satisfy these compositional properties. As an example, we present a formal analysis demonstrating that a particular class of languages which admit \textit{salva vertitate} substitutions - a property which we claim to be a particularly strong example of compositional principle - must also satisfy a very natural combinatorial constraint identified in this paper. 
\end{abstract}

\section*{Introduction}

This essay will present a formal framework and some preliminary results concerning the combinatorial properties of meaning-preserving substitutions of sentences. Following Quine (and ultimately Leibniz), we refer to such operations as \textit{salva veritate} substitutions. \footnote{It should be mentioned that Quine was primarily concerned with truth-value preserving substitutions (hence the \textit{veritate} bit), whereas we will be more broadly concerned with meaning preserving substitutions, in general, of which the truth-value preserving substitutions are a proper subset.}  

The relevant properties of \textit{salva veritate} substitutions are formalized in an abstract property that we introduce in the essay, and which we later call (SST). This property is itself a particular instance (and a very strong one) of a more general class of linguistic constraints that might be referred to as compositionality principles, where, by "compositionality principle", we simply mean any property that might be possessed by a language in virtue of the fact that it exhibits some form of grammatical compositionality. 

Our discussion will be divided into three parts. In Part 1, we briefly present some historical and conceptual background information which will be used to motivate our subsequent investigations. Part 2 is devoted to constructing our formal theory and presenting the main result. Finally, in Part 3, we conclude with a discussion of the preceding findings, and some remarks on directions for future work. 

\section{Historical and Conceptual Background}
By a "substitution" we mean any operation on strings of a language which replaces some sub-string with another. The simplest example would be any operation which replaces some word occurring in a sentence with some other word, but one can also consider substitutions of arbitrary syntactic constituents. Such substitution operations have played an important role in many debates in linguistics and philosophy, and have been particularly important in the analysis of intensional phenomenon, synonymy, and analyticity. 

The literature on these matters is extensive, and a full review is beyond the scope of this article. A good starting point, however, is Quine's classic "Two Dogmas of Empiricism" (\cite{quine}). Quine asks us to consider a typical analytic statement \[ (1) \indent \text{No bachelor is married.} \]
Quine suggests that a core feature of such statements is that they can be converted into logical truths by "putting synonyms for synonyms" (Quine, p. 23). In this particular case, if we substitute the phrase "unmarried man" for "bachelor", we obtain the logically tautological sentence \[(2) \indent \text{No unmarried man is married.}\]
In effect, analyticity of $(1)$ then rests on synonymy of the linguistic forms "unmarried man" and "bachelor", and the fact that $(2)$ is a logical truth. The explanatory burden now rests on finding a satisfactory account of linguistic synonymy. Regarding this matter, Quine points out (p. 27) that "A natural suggestion, deserving close examination, is that the synonymy of two linguistic forms consists simply in their interchangeability in all contexts without change of truth value - interchangeability, in Leibniz's phrase, \textit{salva veritate}."

This suggestion forms the starting point of our investigation. We, however, will not be concerned with examining the philosophical, semantic, or logical properties of \textit{salva veritate} substitutions, but rather their combinatorial properties, a line of investigation which remains largely unexplored. 

Although, perhaps not the conventional view, \textit{salve veritate} substitutions can be viewed as a specific instance of compositionality. Compositionality is a property exhibited by certain linguistic structures, whereby the meaning of any complex expression is determined by the meaning of it's component parts, and the manner in which these component parts are combined (\cite{szabo}). Compositionality has been an important subject of investigation in linguistic and logical scholarship since at least the work of Frege in the 19th century (\cite{frege}, \cite{janssen}, \cite{szabo}), and continues to be an active area of inquiry today (see, e.g. \cite{jacobson}, \cite{kracht})

\section{Formal Machinery}
For our purposes, we consider an interpreted language, by which we mean a set of strings over some alphabet, and a rule for assigning "meanings" to these strings. Formally, we define an interpreted language $\mathcal{L}$ to be an ordered tuple \[\mathcal{L}=(X, S, h, \mathcal{M}),\]
where $X$ denotes the alphabet, $S \subset X^*$ denotes the set of well-formed strings of the language (i.e. the things we can assign interpretations to), $h: S \rightarrow \mathcal{M}$ denotes the interpretation function, which assigns meanings to strings, and $\mathcal{M}$ is our set of meanings. In order to provide a fully general account, we make no assumptions about $\mathcal{M}$. Our "meanings", may therefore consist of anything, including truth-values, propositions, concepts, sentences of a meta-language, etc. For our purposes, we further assume that the alphabet $X$ is finite. 

We also introduce an additional constraint on $\mathcal{L}$. We call this constraint "substitutability of synonymous terms" (SST). Formally we have \begin{definition}[Substitutability of Synonymous Terms] 
Let $u, v \in S$ be well-formed strings satisfying the relation $h(u)=h(v)$. Then for any well-formed string $\alpha u \beta \in S$, the string $\alpha v \beta$ is also well-formed, and $h(\alpha u \beta) = h(\alpha v \beta)$.
\end{definition}
Informally, this condition states that if strings $u, v \in S$ have the same meaning, then the operation of substituting $v$ for $u$ is well-defined for all strings of our language $\mathcal{L}$, and the result of performing such an operation has no effect on meaning. It is important to note that such a condition places both a syntactic and a semantic constraint on $\mathcal{L}$. The syntactic component guarantees that replacing a substring $u$ that occurs in any well-formed string $w$, with a synonymous substring $v$, always produces another well-formed string $w'$. The semantic component guarantees that such substitutions have no impact on meaning, i.e. that synonymous constituents make the same semantic contribution to any linguistic context in which they occur. 

In addition to (SST), we will require another constraint, which we call "inductive constructibility" (IC). Formally we have
\begin{definition}[Inductive Constructibility]
Let $\mathcal{L}=(X, S, h, \mathcal{M})$ be an interpreted language. Then we say that $\mathcal{L}$ satisfies inductive constructibility iff any well-formed string $w\in S$ of length $n>1$ is equal to the concatenation $w=uv$ of two strings $u, v \in S$ which satisfy the relation $|w|= n = |u| + |v|$.  
\end{definition}

Informally, this condition states that any non-trivial string $w$ (i.e. a string of length $>1$) must be composed of smaller well-formed substrings. This prevents the existence of strings $w \in S$ that have non-trivial substrings which themselves are not well-formed. Hence, any non-trivial well-formed strings can be constructed by concatenating smaller well-formed strings. In particular, an inductive argument will immediately show that for $n>1$, any string $w\in S$ of length $|w|=n+1$ is equal to some string concatenation of the form $w=ua,$ where $|u|=n,$ and $x\in X$.

Before proceeding, we introduce some additional terminology that will be needed. Given subsets of well-formed strings $S_1, S_2 \subseteq S$, we say than $S_1$ is (strictly) more expressive that $S_2$ iff $h(S_2)$ is a (proper) subset of $h(S_1)$. Furthermore, for any subset $S' \subseteq S$ we define the expressive power of $S'$ to be equal to $|h(S_1)|$. We define the expressive power of an interpreted language $\mathcal{L}= (X, S, h, \mathcal{M})$ to be equal to the expressive power of $S$. Given a natural number $n \in \mathbb{N}$, we define the $n$-th generation of $S$ to be the set $\text{gen} (n) := \{w \in S : |w| \leq n \}$ of all well-formed strings of length $\leq n$. 

We are now ready for the main result: 

\begin{theorem}
An interpreted language $\mathcal{L}$ which satisfies (SST) and (IC) has infinite expressive power iff gen$(n+1)$ is strictly more expressive than gen$(n)$ for all $n \in \mathbb{N}$. 
\end{theorem}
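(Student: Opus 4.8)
My plan is to prove the two implications separately; essentially all of the content, and all use of (SST) and (IC), goes into the forward direction, which I would handle by contraposition.

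The reverse implication is soft and needs neither (SST) nor (IC). First I would note that $\text{gen}(n)\subseteq X^{\le n}$ is finite because $X$ is finite, that $S=\bigcup_n \text{gen}(n)$, and hence that $h(S)=\bigcup_n h(\text{gen}(n))$ is an increasing union of finite sets with $h(\text{gen}(n))\subseteq h(\text{gen}(n+1))$ for every $n$. If each of these inclusions is strict, then $\bigl(|h(\text{gen}(n))|\bigr)_n$ is a strictly increasing sequence of natural numbers, hence unbounded, so $|h(S)|=\infty$; that is, $\mathcal{L}$ has infinite expressive power.

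For the forward implication I would argue contrapositively: assume the expressive chain is ever stationary, say $h(\text{gen}(n+1))=h(\text{gen}(n))=:M$ for some fixed $n$, and deduce $h(S)=M$, which is finite (again since $X$ is), giving finite expressive power. The key claim is that $h(w)\in M$ for every $w\in S$, which I would prove by induction on $|w|$. If $|w|\le n+1$ this is immediate, since $h(w)\in h(\text{gen}(n+1))=M$. If $|w|=L\ge n+2$, I would invoke the consequence of (IC) recorded just before the theorem to write $w=ux$ with $u\in S$, $|u|=L-1$, and $x\in X$. By the induction hypothesis $h(u)\in M=h(\text{gen}(n))$, so I may choose $u'\in\text{gen}(n)$ with $h(u')=h(u)$; applying (SST) with empty left context and right context $\beta=x$ (legitimate, since $ux\in S$ and (SST) imposes no nonemptiness condition on the context) yields $u'x\in S$ and $h(w)=h(ux)=h(u'x)$. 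Since $|u'x|\le n+1$, we get $h(w)=h(u'x)\in h(\text{gen}(n+1))=M$, closing the induction. Hence $h(S)=M$ is finite. Contrapositively, if $\mathcal{L}$ has infinite expressive power then $h(\text{gen}(n+1))\neq h(\text{gen}(n))$ for all $n$, and since $h(\text{gen}(n))\subseteq h(\text{gen}(n+1))$ always holds, this says precisely that $\text{gen}(n+1)$ is strictly more expressive than $\text{gen}(n)$ for all $n$.

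The hard part — really the only content — is the single-symbol reduction $w=ux$ with $u\in S$ used at the inductive step; it is what lets the induction close cleanly. A more naive route, decomposing $w=uv$ via (IC) and replacing both factors by synonyms of length $\le n$, only produces a string of length $\le 2n$, which fails to be shorter than $w$ exactly when $n+2\le|w|\le 2n$, so the induction would stall in that window. I would therefore want to be careful that the pre-theorem remark is being used in the strong form "the length-$(|w|-1)$ prefix of $w$ lies in $S$" (so that $h(u)$ is defined and the induction hypothesis applies to $u$), handling $|w|=2$ directly from (IC); with that in hand the rest is routine.
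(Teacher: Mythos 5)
Your proof is correct (modulo one caveat noted below, which applies equally to the paper's own argument) but takes a genuinely different route in the forward direction. The reverse direction is the same in both: one genuinely new meaning per generation forces $|h(S)|=\infty$; your phrasing via a strictly increasing sequence of finite sets is a cleaner rendering of the paper's explicit union $\bigcup_{k}\{m_k\}$. For the forward direction the paper argues by contradiction from a minimal-length witness: it fixes $m\in h(S)\setminus h(\text{gen}(n+1))$, takes a shortest $w$ with $h(w)=m$, splits $w=uv$ with $|u|=n+1$, and replaces $u$ by a synonym in $\text{gen}(n)$ via (SST) to contradict minimality. You instead run a contrapositive induction on length, peeling a single symbol at a time; this yields the stronger and more transparent conclusion that one stationary step $h(\text{gen}(n))=h(\text{gen}(n+1))$ collapses all of $h(S)$ onto $h(\text{gen}(n))$. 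The essential (SST) move --- replace a well-formed prefix of length $>n$ by a synonym of length $\le n$ with the remainder of the string as right context --- is identical in the two proofs, and your observation that a naive two-block split stalls in the window $n+2\le|w|\le 2n$ correctly identifies why the split point must be controlled.

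The caveat: (IC) as stated only guarantees that \emph{some} decomposition $w=uv$ with $u,v\in S$ exists, not a decomposition at a prescribed position. The paper's proof silently requires the length-$(n+1)$ prefix of $w$ to lie in $S$; yours requires the length-$(|w|-1)$ prefix to lie in $S$, i.e.\ the strong form of the pre-theorem remark you cite. Neither assertion follows from (IC) alone: iterating (IC) on the right factor of $w=uv$ produces $w=uv'x$ with $x\in X$, but $uv'\in S$ does not follow because $S$ is not assumed closed under concatenation. You are explicit about exactly where this assumption enters, which the paper is not; under the strengthened reading of (IC) that the paper evidently intends, both arguments are sound.
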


\begin{proof}
Let $\mathcal{L}=(X, S, h, \mathcal{M})$ be an interpreted language which satisfies (SST) and (IC). We then have two directions to prove. \\
\indent ($\Rightarrow$) Let $\mathcal{L}$ have infinite expressive power, and suppose for contradiction that gen$(n+1)$ is not strictly more expressive gen$(n)$ for some $n\in \mathbb{N}$. Then $h(\text{gen}(n)) = h(\text{gen}(n+1))$, i.e. $\text{gen}(n)$ and $\text{gen}(n+1)$ have equal expressive power. In particular, we see that the expressive power of $\text{gen}(n+1)$ is finite and equal to \[|h(\text{gen}(n+1))| = |h(\text{gen}(n))| \leq |X|^n.\] 
\indent If $\mathcal{L}$ has infinite expressive power, then there exist meanings $m \in h(S)$ such that $m \notin h(\text{gen}(n+1))$. Now fix any such meaning $m$, and let $w\in S$ be a string of minimal length $|w|=k > n+1$ that is assigned meaning $h(w)=m$. By (IC), $w$ can be expressed in the form $w=uv$ for $u, v \in S$ of lengths $|u|=n+1$ and $|v|=k'=k-(n+1)$. Given that $u$ has length $n+1$, there exists a string $u' \in \text{gen}(n)$ such that $h(u)=h(u')$. By (SST), it follows that \[h(u'v)=h(uv)=m.\]
Now, since $|u'v| < |uv|$, this contradicts our assumption that $|w|=|uv|$ is the shortest possible length of any string that expresses meaning $h(w)=m$. Hence, the proper inclusion \[h(\text{gen}(n)) \subset h(\text{gen}(n+1))\] holds for all $n \in \mathbb{N}$. \\
\indent ($\Leftarrow$) Suppose that $\text{gen}(n+1)$ is strictly more expressive that $\text{gen}(n)$. Then for all $k\in \mathbb{N}$, there exists a meaning $m_k \in h(\text{gen}(k+1)-\text{gen}(k))$ that is expressible by a string of length $k+1$, but not by any string of length $\leq k$. Taking the union \[A:=\bigcup_{k=1}^\infty \{m_k\}, \] we then obtain an infinite collection of meanings $A \subseteq \mathcal{M}$ that is contained in $h(S),$ i.e. $A \subseteq h(S)$. Hence $\mathcal{L}$ has infinite expressive power.  

\end{proof}

\section{Conclusion}
The theorem of section 2 demonstrates a close connection between the admissible substitutions of a language, and its expressive power. In effect, this provides a practical demonstration of the fact that at least some non-trivial semantic information about a given language can be obtained purely by examining the meaning-preserving syntactic operations which are permitted by the language under consideration. Furthermore, this suggests that questions concerning the syntax-semantics interface, and the relationship between form and meaning, may have a non-trivial combinatorial component that is entirely independent of any particular semantic interpretation. 

The constraints placed on $\mathcal{L}$ in this paper, however, require additional refinement in order to be suitable for the description of most interesting examples of interpreted languages. Substitutability of synonymous terms (SST) and inductive constructibility (IC) are rather strong assumptions to make about a language. Such assumptions may in fact be perfectly innocuous and realistic assumptions for the analysis of many formal languages, but for many languages of interest, especially in the case natural languages, it seems plausible that (SST) will need to replaced - substituted, if you will - by a weaker constraint on substitutability. Well known intensional phenomenon, such as those identified and investigated by Russell, Kripke, Montague, Partee, (\cite{russell}, \cite{kripke}, \cite{montague}, \cite{partee})  and others, demonstrate conditions under which meaning is not preserved under substitutions of synonymous terms. In some cases, this is likely to require weakening our assumption regarding the preservation of well-formedness under substitution. In other cases, we will likely have to weaken our assumption that meaning remains identical under such substitutions. Many applications will likely require some combination of both. 

(IC) also appears to be quite problematic for many natural languages, for a variety of related reasons. Sentences of natural language are not, in general, composed of smaller sub-sentences. In the language of section 2, this means that the non-trivial well-formed strings of a natural language may not be composed of smaller well-formed substrings - which was a crucial property in our proof of the theorem presented in section 2. This seems to be closely related to the fact that most grammars of natural languages construct sentences not on the basis of concatenation of simpler formula, but rather on the basis of grammatical relations of either dependency (in the case of dependency grammars) or constituency (in the case of phrase structure grammars). 

Although significantly more complicated, such grammars do nevertheless impose relations of grammatical hierarchy and interdependence on the various syntactic forms of their respective languages. Under such relations, certain syntactic forms may be seen as more or less primitive in relation to others. This opens the door to extending our proof strategy used in our proof of the main result of section 2, to languages with more complicated grammatical relations. For our purposes, the crucial feature of (IC) is that it allowed us to relate certain properties of proper substrings to the properties of the well-formed strings in which they occur. In a similar fashion, we may hope to identify methods of relating the expressive properties of more primitive syntactic forms to the expressive properties of the more complex syntactic forms in which they occur. 

Complications aside, there is at least one area in which future investigations are likely to be considerably simpler. In an attempt to construct the most general possible theory, we have made no assumptions about the nature of $\mathcal{M}$. However, the semantics of most languages, whether they be formal or natural, generally allows for some additional structure on $\mathcal{M}$. In most cases, this will be some sort of logical, set theoretic, or algebraic structure. Whatever the case may be, this structure will furnish us with additional relations (i.e. constraints) between the various meanings and syntactic forms countenanced by the language. In general, the stronger these constraints, the more we can say about the combinatorial relations between the syntax and semantics of the language. 

Regarding (IC) and the structure of $\mathcal{M}$, several suggestions of Pietroski (\cite{pietroski}) appear to be a promising starting point for extending our methods to the analysis of natural language. In particular, Pietroski's analysis of how meanings compose in natural language suggests that many instances of compositionality may be reducible to logical conjunction. 

Future works on these topics, therefore has several areas to explore. First we can of course seek to prove more theorems about both (SST) and (IC), as we have done here. More generally, there are a variety of suggestions in the literature on compositionality, which make explicit claims about how the meanings of complex expressions of a language are related to the meanings their component parts. Many of these claims about compositionality are likely to entail specific combinatorial properties about their languages, which may then be identified using formal methods similar to those which we have employed in this paper. Finally, one might hope to obtain a deeper understanding of the relations between synonymy, intension, and compositionality by using these methods. These are rather distinct concepts, and yet they all seem to exhibit specific combinatorial properties. By identifying and relating the various combinatorial properties associated with these linguistic phenomenon, we may hope to thereby identify important relationships between synonymy, intension, and compositionality, and other related linguistic concepts. 

\medskip


\begin{thebibliography}{}

\bibitem{frege} Frege, Gottlob [1884], (1980). \textit{The Foundations of Arithmetic}. 

\bibitem{jacobson} Jacobson, Pauline; Barker, Chris (2007). \textit{Direct Compositionality}. 

\bibitem{janssen} Janssen, Theo M. V. (2001). "Frege, Contextuality, and Compositionality". \textit{Journal of Logic, Language, and Information} 10 (1). Accessed via: 
https://www.jstor.org/stable/40180264

\bibitem{kracht} Kracht, Marcus (2011). \textit{Interpreted Languages and Compositionality}. 

\bibitem{kripke} Kripke, Saul (1979). "A Puzzle About Belief." Accessed via: \\
http://www.uvm.edu/~lderosse/courses/lang/Kripke(1979).pdf

\bibitem{montague} Montague, Richard (1973). "The Proper Treatment of Quantification in Ordinary English." Accessed via: \\ https://doi.org/http://www.cs.rhul.ac.uk/~zhaohui/montague73.pdf 

\bibitem{partee}  Partee, Barbara Hall (1970). "Opacity, coreference, and pronouns." \textit{Synthese} 21 (3-4). 359 - 385. Accessed via: \\
https://www.jstor.org/stable/20114733

\bibitem{pietroski} Pietroski, Paul (2018). \textit{Conjoining Meanings: Semantics Without Truth Values}. 

\bibitem{russell} Russell, Bertrand (1905). "On Denoting." Reproduced in \textit{Problems in the Philosophy of Language} (1969). Olshewsky, Thomas M.   

\bibitem{szabo} Szabo, Zoltan Gendler (2020). "Compositionality". \textit{The Stanford Encyclopedia of Philosophy"}. Accessed via:
https://plato.stanford.edu/entries/compositionality/

\bibitem{quine} Quine, Willard van Orman (1953). \textit{From a Logical Point of View: Nine Logico-Philosophical Essays}.


\end{thebibliography}
\end{document}